\relax
\documentclass[letterpaper]{article} 
\usepackage{aaai19}  
\usepackage{times}  
\usepackage{helvet}  
\usepackage{courier}  
\usepackage{url}  
\usepackage{graphicx}  

\frenchspacing  

\usepackage{booktabs}       
\usepackage{amsfonts}       
\usepackage{amssymb}
\usepackage{amsmath}
\usepackage{amsthm}
\usepackage{algorithm}
\usepackage{algorithmic}
\usepackage{subfig}
\usepackage{graphicx}
\usepackage{enumerate}
\usepackage{diagbox}

\setlength{\pdfpagewidth}{8.5in}  
\setlength{\pdfpageheight}{11in}  
  \pdfinfo{
/Title (On Geometric Alignment in Low Doubling Dimension)
/Author (Hu Ding, Mingquan Ye)}
\setcounter{secnumdepth}{2}

\newtheorem{definition}{Definition}
\newtheorem{theorem}{Theorem}
\newtheorem{lemma}{Lemma}
\newtheorem{proposition}{Proposition}

\newtheorem{claim}{Claim}

 \begin{document}
%
\title{On Geometric Alignment in Low Doubling Dimension}
\author{Hu Ding\\
School of Computer Science and Technology\\
School of Data Science\\
University of Science and Technology of China\\
He Fei, China, 230026\\
\And Mingquan Ye\\
Department of Computer Science and Engineering\\
Michigan State University\\
East Lansing, MI, USA, 48824}

\maketitle
\begin{abstract}
In real-world, many problems can be formulated as the alignment between two geometric patterns. Previously, a great amount of research focus on the alignment of 2D or 3D patterns, especially in the field of computer vision. Recently, the alignment of geometric patterns in high dimension finds several novel applications, and has attracted more and more attentions. However, the research is still rather limited in terms of algorithms. To the best of our knowledge, most existing approaches for high dimensional alignment are just simple extensions of their counterparts for 2D and 3D cases, and often suffer from the issues such as high complexities. In this paper, we propose an effective framework to compress the high dimensional geometric patterns and approximately preserve the alignment quality. As a consequence, existing alignment approach can be applied to the compressed geometric patterns and thus the time complexity is significantly reduced. Our idea is inspired by the observation that high dimensional data often has a low intrinsic dimension. We adopt the widely used notion ``doubling dimension'' to measure the extents of our compression and the resulting approximation. Finally, we test our method on both random and real datasets; the experimental results reveal that running the alignment algorithm on compressed patterns can achieve similar qualities, comparing with the results on the original patterns, but the running times (including the times cost for compression) are substantially lower.  \end{abstract}

\section{Introduction}
\label{sec-intro}

Given two geometric patterns, the problem of alignment is to find their appropriate spatial positions so as to minimize the difference between them. In general, a geometric pattern is represented by a set of (weighted) points in the space, and their difference is often measured by some objective function. In particular, geometric alignment finds many applications in the field of computer vision, such as image retrieval, pattern recognition, fingerprint and facial shape alignment, etc~\cite{cohen1999earth,maltoni2009handbook,cao2014face}. For different applications, we may have different constraints for the alignment, e.g., we allow rigid transformations for fingerprint alignment. In addition, Earth Mover's Distance (EMD)~\cite{rubner2000earth} has been widely adopted as the metric for measuring the difference of patterns in computer vision, where its major advantage over other measures is the robustness with respect to noise in practice. 
%
%
%
Besides the computer vision applications in 2D or 3D, recent research shows that a number of high dimensional problems can be solved by geometric alignments. We briefly introduce several interesting examples below. 

\textbf{(1)} The research on natural language processing has revealed that different languages often share some similar structure at the word level~\cite{youn2016universal}; in particular, the recent study on word semantics embedding has also shown the existence of structural isomorphism across languages~\cite{mikolov2013exploiting}, and further finds that EMD can serve as a good distance for languages or documents~\cite{DBLP:conf/emnlp/ZhangLLS17,kusner2015word}. Therefore, ~\cite{DBLP:conf/emnlp/ZhangLLS17} proposed to learn the transformation between different languages without any cross-lingual supervision, and the problem is reduced to minimizing the EMD via finding the optimal  geometric alignment in high dimension.  
\textbf{(2)} A Protein-Protein Interaction (PPI) network is a graph representing the interactions among proteins. Given two PPI networks, finding their alignment is a fundamental bioinformatics problem for understanding the correspondences between different species~\cite{malod2017unified}. However, most existing approaches require to solve the NP-hard subgraph isomorphism problem and often suffer from high computational complexities. To resolve this issue, ~\cite{DBLP:conf/aaai/LiuDC017} recently applied the geometric embedding techniques to develop a new framework based on geometric alignment in Euclidean space. 
\textbf{(3)} 
Other applications of high dimensional alignment include domain adaptation and indoor localization. Domain adaptation is an important problem in machine learning, where the goal is to predict the annotations of a given unlabeled dataset by determining the transformation (in the form of EMD) from a labeled dataset~\cite{pan2010survey}; as the rapid development of wireless technology, indoor localization becomes rather important for locating a person or device inside a building. 
Recent studies show that they both can be well formulated as geometric alignments in high dimension. We refer the reader to \cite{courty2017joint} and \cite{yang2012locating} for more details.

Despite of the above studies in terms of applications, the research on the algorithms is still rather limited and far from being satisfactory. Basically, we need to take into account of the high dimensionality and large number of points of the geometric patterns, simultaneously. 
In particular, as the developing of data acquisition techniques, data sizes increase fast and designing efficient algorithms for high dimensional alignment will be important for some applications. For example, due to the rapid progress of high-throughput sequencing technologies, biological data are growing exponentially~\cite{yin2017computing}. 
In fact, even for the 2D and 3D cases, solving the geometric alignment problem is quite challenging yet. For example, the well-known {\em iterative closest point (ICP)} method~\cite{besl1992method} can only guarantee to obtain a local optimum. 
More previous works will be discussed in Section~\ref{sec-prior}.



To the best of our knowledge, we are the first to consider developing efficient algorithmic framework for the geometric alignment problem in large-scale and high dimension. Our idea is inspired by the observation that many real-world datasets often manifest low intrinsic dimensions~\cite{belkin2003problems}. 
For example, human handwriting images can be well embedded to some low dimensional manifold though the Euclidean dimension can be very high~\cite{tenenbaum2000global}. Following this observation, we consider to exploit the widely used notion, ``doubling dimension''~\cite{krauthgamer2004navigating,talwar2004bypassing,karger2002finding,har2006fast,dasgupta2013randomized}, to deal with the geometric alignment problem. 
Doubling dimension is particularly suitable to depict the data having low intrinsic dimension. We prove that the given geometric patterns with low doubling dimensions can be substantially compressed so as to save a large amount of running time when computing the alignment. More importantly, our compression approach is an independent step, and hence can serve as the preprocessing for various alignment methods.


The rest of the paper is organized as follows. We provide the definitions that are used throughout this paper in Section~\ref{sec-pre}, and discuss some existing approaches and our main idea in Section~\ref{sec-prior}. Then, we present our algorithm, analysis, and the time complexity in detail in Section~\ref{sec-aa} and \ref{sec-time}. Finally, we study the practical performance of our proposed algorithm in Section~\ref{sec-exp}.


\subsection{Preliminaries}
\label{sec-pre}

Before introducing the formal definition of geometric alignment, we need to define EMD and rigid transformation first.

\begin{definition}[Earth Mover's Distance (EMD)]
\label{def-emd}
Let $A=\{a_1, a_2, \cdots, a_{n_1}\}$ and $B=\{b_1, b_2, \cdots, b_{n_2}\}$ be two sets of weighted points in $\mathbb{R}^d$ with nonnegative weights $\alpha_i$ and $\beta_j$ for each $a_i\in A$ and $b_j\in B$ respectively, and $W_A$ and $W_B$ be their respective total weights. The earth mover's distance between $A$ and $B$ is $\mathcal{EMD}(A, B)=$
\begin{eqnarray}
\frac{1}{\min\{W_A, W_B\}}\min_{F}\sum^{n_1}_{i=1}\sum^{n_2}_{j=1}f_{ij}||a_i-b_j||^2, \label{for-emd}
\end{eqnarray} 
where $F=\{f_{ij}\}$ is a feasible flow from $A$ to $B$, i.e., each $f_{ij}\geq 0$, $\sum^{n_1}_{i=1}f_{ij}\leq\beta_j$, $\sum^{n_2}_{j=1}f_{ij}\leq\alpha_i$, and $\sum^{n_1}_{i=1}\sum^{n_2}_{j=1}f_{ij}=\min\{W_A, W_B\}$. 
\end{definition}
\vspace{-0.05in}

Intuitively, EMD can be viewed as the minimum transportation cost between $A$ and $B$, where the weights of $A$ and $B$ are the ``supplies'' and ``demands'' respectively, and the cost of each edge connecting a pair of points from $A$ to $B$ is their ``ground distance''. In general, the ``ground distance'' can be defined in various forms, and here we simply use the squared distance because it is widely adopted in practice. 

\begin{definition}[Rigid Transformation]
\label{def-rt}
Let $P$ be a set of points in $\mathbb{R}^{d}$. A rigid transformation $\mathcal{T}$ on $P$ is a 
 transformation ({\em i.e.,} rotation, translation, reflection, or their combination) which preserves the pairwise distances of the points in $P$. 
\end{definition}

%


We consider rigid transformation for alignment, because it is very natural to interpret in real-world and has already been used by the aforementioned applications.

\begin{definition}[Geometric Alignment]
\label{def-align}
Given two weighted point sets $A$ and $B$ as described in Definition~\ref{def-emd}, the problem of geometric alignment between $A$ and $B$ under rigid transformation is to determine a rigid transformation $\mathcal{T}$ for $B$ so as to minimize the earth mover's distance  $\mathcal{EMD}(A, \mathcal{T}(B))$.
\end{definition}


As previously mentioned, we consider to use doubling dimension to describe high dimensional data having low intrinsic dimension. We denote a metric space by $(X, d_X)$ where $d_X$ is the distance function of the set $X$. For instance, we can imagine that $X$ is a set of points in a low dimensional manifold and $d_X$ is simply Euclidean distance. For any $x\in X$ and $r\geq 0$, $Ball(x, r)=\{p\in X\mid d_X(x,p)\leq r\}$ indicates the ball of radius $r$ around $x$ (note that $Ball(x, r)$ is a subset of $X$).

 \begin{definition}[Doubling Dimension]
\label{def-dd}
The doubling dimension of a metric space $(X, d_X)$ is the smallest number $\rho$, such that for any $x\in X$ and $r\geq 0$, $Ball(x, 2r)$ is always covered by the union of at most $2^\rho$ balls with radius $r$.
\end{definition}

Doubling dimension describes the expansion rate of $(X, d_X)$; intuitively, we can imagine a set of points uniformly distributed inside a $\rho$-dimensional hypercube, where its doubling dimension is $O(\rho)$ but the Euclidean dimension can be very high. For a more general case, a manifold in high dimensional Euclidean space may have a very low doubling dimension, as many examples studied in machine learning~\cite{belkin2003problems}. 
Unfortunately, as shown before~\cite{laakso2002plane}, such low doubling dimensional metrics cannot always be embedded to low dimensional Euclidean spaces with low distortion in terms of Euclidean distance. 
Therefore, we need to design the techniques being able to manipulate the data in high dimensional Euclidean space directly.

\subsection{Existing Results and Our Approach}
\label{sec-prior}


%
%


If building a bipartite graph, where the two columns of vertices correspond to the points of $A$ and $B$ respectively and each edge connecting $(a_i, b_j)$ has the weight $||a_i-b_j||^2$, we can see that computing EMD actually is a min-cost flow problem. Many min-cost flow algorithms have been developed in the past decades, such as 
%
{\em Network simplex algorithm}~\cite{ahuja1993network}, a specialized version of the simplex algorithm.
%
Since EMD is an instance of min-cost flow problem in Euclidean space and the geometric techniques (e.g., the geometric sparse spanner) are applicable, a number of faster algorithms have been proposed in the area of computational geometry~\cite{DBLP:conf/compgeom/AgarwalFPVX17,cabello2008matching,indyk2007near}, however, most of them only work for low dimensional case. 
Several EMD algorithms with assumptions (or some modifications on the objective function of EMD) have been studied in practical areas~\cite{pele2009fast,ling2007efficient,DBLP:journals/siamsc/BenamouCCNP15}.

Computing the geometric alignment of $A$ and $B$ is more challenging, since we need to determine the rigid transformation and EMD flow simultaneously. Moreover, due to the flexibility of rigid transformations, we cannot apply the EMD embedding techniques~\cite{IT03,andoni2009efficient} to relieve the challenges. For example,  the embedding can only preserve the EMD between $A$ and $B$; however, since there are infinite number of possible rigid transformations $\mathcal{T}$ for $B$ (note that we do not know $\mathcal{T}$ in advance), it is difficult to also preserve the EMD between $A$ and $\mathcal{T}(B)$. In theory, 
\cite{cabello2008matching} presented a $(2+\epsilon)$-approximation solution for the 2D case, and later \cite{klein2005approximation} achieved an $O(2^{d-1})$-approximation in $\mathbb{R}^d$; 
\cite{DBLP:journals/algorithmica/DingX17} proposed a PTAS for constant dimension. However, these theoretical algorithms cannot be efficiently implemented when the dimensionality is not constant. 
\cite{DBLP:journals/algorithmica/DingX17} also mentioned that any constant factor approximation needs a time complexity at least $n^{\Omega(d)}$ based on some reasonable assumption in the theory of computational complexity, where $n=\max\{|A|, |B|\}$. That is, it is unlikely to obtain a $(1+\epsilon)$-approximation within a practical running time, especially when $n$ is very large.

In practice, \cite{cohen1999earth} proposed an alternating minimization approach for computing the geometric alignment of $A$ and $B$. 
Several other approaches~\cite{cornea20053d,todorovic2008region} based on graph matching are inappropriate to be extended for high dimensional alignment. 
In machine learning, a related topic is called ``manifold alignment''~\cite{ham2005semisupervised,wang2011manifold}; however, it usually has different settings and applications, and thus is out of the scope of this paper. 

Because the approach of~\cite{cohen1999earth}  is closely related to our proposed algorithm, we introduce it with more details for the sake of completeness. 
Roughly speaking, their approach is similar to the {\em Iterative Closest Point method} (ICP) method~\cite{besl1992method}, where its each iteration alternatively updates the EMD flow and rigid transformation. Thus it converges to some local optimum eventually. 
To update the rigid transformation, we can apply {\em Orthogonal Procrustes (OP) analysis}~\cite{schonemann1966generalized}. The original OP analysis is only for unweighted point sets, and hence we need some significant modification for our problem.


Suppose that the EMD flow $F=\{f_{ij}\}$ is fixed and the rigid transformation is waiting to update in the current stage. We imagine two new sets of weighted points  
\begin{eqnarray}
\hat{A}&=&\{a^1_1, a^2_1, \cdots, a^{n_2}_1; a^1_2, a^2_2, \cdots, a^{n_2}_2;\nonumber\\
&& \cdots; a^1_{n_1}, a^2_{n_1}, \cdots, a^{n_2}_{n_1}\};\label{for-cg1}\\
\hat{B}&=&\{b^1_1, b^1_2, \cdots, b^1_{n_2}; b^2_1, b^2_2, \cdots, b^2_{n_2};\nonumber\\
&& \cdots; b^{n_1}_1, b^{n_1}_{2}, \cdots, b^{n_1}_{n_2}\},\label{for-cg2}
\end{eqnarray}
where each $a^j_i$ (resp., $b^i_j$) has the weight $f_{ij}$ and the same spatial position of  $a_i$ (resp., $b_j$). With a slight abuse of notations, we also use $a^j_i$ (resp., $b^i_j$) to denote the corresponding $d$-dimensional column vector in the following description.
First, we take a translation vector $\overrightarrow{v}$ such that the weighted mean points of $\hat{A}$ and $\hat{B}+\overrightarrow{v}$ coincide with each other (this can be easily proved, due to the fact that the objective function uses squared distance~\cite{cohen1999earth}). Second, by OP analysis, we compute an orthogonal matrix $\mathcal{R}$ for $\hat{B}+\overrightarrow{v}$ to minimize its weighted $L^2_2$ difference to $\hat{A}$. For this purpose, we generate two $d\times (n_1 n_2)$ matrices $M_A$ and $M_B$, where each point of $\hat{A}$ (resp., $\hat{B}+\overrightarrow{v}$) corresponds to an individual column of $M_A$ (resp., $M_B$); for example, a point $a^j_i\in \hat{A}$ (resp., $b^i_j+\overrightarrow{v}\in \hat{B}+\overrightarrow{v}$) corresponds to a column $\sqrt{f_{ij}}a^j_i$ (resp., $\sqrt{f_{ij}}(b^i_j+\overrightarrow{v})$) in $M_A$ (resp., $M_B$). Let  the SVD of $M_A\times M^T_B$ be $U\Sigma V^T$, and the optimal orthogonal matrix $\mathcal{R}$ should be $UV^T$ through OP analysis. Actually we do not need to really construct the large matrices $M_A$ and $M_B$, since many of the columns are identical.  Instead, we can compute the multiplication $M_A\times M^T_B$ in $O(n_1n_2d+\min\{n_1, n_2\}\cdot d^2)$ time (see Lemma~\ref{lem-app} in Appendix). Therefore, the time complexity for obtaining the optimal $\mathcal{R}$ is $O(n_1n_2d+\min\{n_1, n_2\}\cdot d^2+d^3)$.

\begin{proposition}
\label{pro-cg}
Each iteration of the approach of~\cite{cohen1999earth} takes $\Gamma(n_1, n_2, d)+O(n_1n_2d+\min\{n_1, n_2\}\cdot d^2+d^3)$ time, where $\Gamma(n_1, n_2, d)$ indicates the time complexity of the EMD algorithm it adopts. In practice, we usually assume $n_1, n_2=O(n)$ with some $n\geq d$, and then the complexity can be simplified to be $\Gamma(n, d)+O(n^2d)$.
\end{proposition}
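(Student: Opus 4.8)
The plan is to bound the cost of a single iteration by accounting separately for its two phases: (i) re-optimizing the EMD flow $F=\{f_{ij}\}$ with the current rigid transformation held fixed, and (ii) re-optimizing the rigid transformation (the translation $\overrightarrow{v}$ together with the orthogonal matrix $\mathcal{R}$) with $F$ held fixed. Phase (i) is, by construction, one invocation of whatever EMD/min-cost-flow subroutine the method adopts, and therefore contributes exactly $\Gamma(n_1,n_2,d)$ by definition; nothing more needs to be said about it. The work is thus entirely in bounding phase (ii).

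For phase (ii), I would charge the steps in order. Computing the translation $\overrightarrow{v}$ amounts to forming the weighted means of $\hat A$ and $\hat B$; writing the mean of $\hat A$ as $\frac{1}{W}\sum_{i}\bigl(\sum_{j} f_{ij}\bigr) a_i$ (and symmetrically for $\hat B$), this costs $O(n_1 n_2)$ to form the partial sums plus $O((n_1+n_2)d)$ for the weighted combination of the $d$-vectors, all absorbed into $O(n_1 n_2 d)$. The dominant step is the $d\times d$ product $M_A M_B^{T} = \sum_{i,j} f_{ij}\, a_i (b_j+\overrightarrow{v})^{T}$: the key point is that the $d\times(n_1 n_2)$ matrices $M_A, M_B$ have many repeated columns and need not be materialized, so this product can be evaluated in $O(n_1 n_2 d + \min\{n_1,n_2\}\cdot d^2)$ time by Lemma~\ref{lem-app}. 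Finally, computing the SVD $U\Sigma V^{T}$ of the $d\times d$ matrix $M_A M_B^{T}$ and forming $\mathcal{R}=UV^{T}$ each take $O(d^3)$.

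Adding up phase (ii) gives $O(n_1 n_2 d) + O(n_1 n_2 d + \min\{n_1,n_2\}\cdot d^2) + O(d^3)$, which is $O(n_1 n_2 d + \min\{n_1,n_2\}\cdot d^2 + d^3)$; combined with the $\Gamma(n_1,n_2,d)$ of phase (i) this is the first claimed bound. For the simplified statement, substitute $n_1,n_2 = O(n)$ with $n\ge d$: then $n_1 n_2 d = O(n^2 d)$, while $d\le n$ gives $\min\{n_1,n_2\}\cdot d^2 = O(n d^2) = O(n^2 d)$ and $d^3 = d\cdot d^2 \le d\cdot n^2 = O(n^2 d)$, so the polynomial terms collapse to a single $O(n^2 d)$, yielding $\Gamma(n,d) + O(n^2 d)$.

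I do not expect a genuine obstacle here: the analysis is just a summation of per-step costs followed by the elementary $d\le n$ simplification. The only place that needs actual care — and the reason the bound does not degrade to $\Theta(n_1 n_2 d)$ being spent merely on writing down the factors of the matrix product — is the claim that $M_A M_B^{T}$ can be formed without ever materializing its $\Theta(n_1 n_2)$-column factors, which is precisely the content of Lemma~\ref{lem-app} in the appendix.
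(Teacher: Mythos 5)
Your proposal is correct and follows essentially the same route as the paper: the iteration cost is split into the EMD subroutine (charged as $\Gamma(n_1,n_2,d)$ by definition) and the transformation update, whose dominant cost is evaluating $M_A M_B^{T}$ without materializing the $\Theta(n_1 n_2)$-column factors via Lemma~\ref{lem-app}, plus the $O(d^3)$ SVD. The final simplification under $n_1,n_2=O(n)$ and $n\geq d$ matches the paper's as well.
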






The bottleneck is that the algorithm needs to repeatedly compute the EMD and transformation, especially when $n$ and $d$ are large (usually $\Gamma(n,d)=\Omega(n^2 d)$). Based on the property of low doubling dimension, we construct a pair of compressed point sets to replace the original $A$ and $B$, and run the same algorithm on the compressed data instead. As a consequence, the running time is reduced significantly. Note that our compression step is \textbf{independent of} the approach~\cite{cohen1999earth}; actually, any alignment method with the same objective function in Definition~\ref{def-align} can benefit from our compression idea. 
Recently, \cite{DBLP:journals/corr/NasserJF15} proposed a core-set based compression approach to speed up the computation of alignment. However,  their method requires to know the correspondences between the point sets in advance and therefore it is not suitable to handle EMD; moreover, their compression achieves the advantage only when $d$ is small.


\section{The Algorithm and Analysis}
\label{sec-aa}
Our idea starts from the widely studied 
$k$-center clustering.
 Given an integer $k\geq 1$ and a point set $P$ in some metric space, $k$-center clustering is to partition $P$ into $k$ clusters and cover each cluster by an individual ball, such that the maximum radius of the balls is minimized. \cite{gonzalez1985clustering} presented an elegant $2$-approximation algorithm, where the radius of each resulting ball (i.e., cluster) is at most two times the optimum. Initially, it selects an arbitrary point, say $c_1$, from the input $P$ and sets $S=\{c_1\}$; then it iteratively selects a new point which has the largest distance to $S$ among the points of $P$ and adds it to $S$, until $|S|=k$ (the distance between a point $q$ and $S$ is defined as $\min\{||q-p||\mid p\in S\}$); suppose $S=\{c_1, \cdots, c_k\}$, and then $P$ is covered by the $k$ balls $Ball(c_1, r), \cdots, Ball(c_k, r)$ with  $r\leq\min\{||c_i-c_j||\mid 1\leq i\neq j\leq k\}$. It is able to prove that $r$ is at most two times the optimal radius of the given instance. 
Using the property of doubling dimension, we have the following lemma.

\begin{lemma}
\label{lem-number}
Let $P$ be a point set in $ \mathbb{R}^d$ with the doubling dimension $\rho\ll d$. The diameter of $P$ is denoted by $\Delta$, i.e., $\Delta=\max\{||p-q||\mid p, q\in P\}$. Given a small parameter $\epsilon>0$, if one runs the $k$-center clustering algorithm of Gonzalez by setting $k=(\frac{2}{\epsilon})^\rho$, the radius of each resulting ball is at most $\epsilon \Delta$. 
\end{lemma}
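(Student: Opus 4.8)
The plan is to combine the defining property of doubling dimension (Definition~\ref{def-dd}), applied iteratively, with the standard structural property of Gonzalez's greedy procedure, and then finish by a pigeonhole argument. The first ingredient is an invariant of the algorithm. Let $c_1,\dots,c_k$ be the centers produced, let $r:=\max_{p\in P}\min_{1\le i\le k}\|p-c_i\|$ be the radius of the resulting clustering, and let $c_{k+1}\in P$ attain this maximum (the point the algorithm would select next). Writing $r_j:=\min_{l<j}\|c_j-c_l\|$, I would first check that $r_2\ge r_3\ge\cdots\ge r_{k+1}$, since $r_{j+1}\le\min_{l\le j-1}\|c_{j+1}-c_l\|\le\max_{p\in P}\min_{l\le j-1}\|p-c_l\|=r_j$, where the last equality holds because $c_j$ is by construction the point of $P$ farthest from $\{c_1,\dots,c_{j-1}\}$. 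Hence for $i<j$ we get $\|c_i-c_j\|\ge r_j\ge r_{k+1}=r$; that is, the $k+1$ points $c_1,\dots,c_{k+1}$ are pairwise at distance at least $r$.

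The second ingredient bounds how many small balls are needed to cover $P$. Since $P$ has diameter $\Delta$, we have $P\subseteq Ball(c_1,\Delta)$. Put $t:=\log_2(2/\epsilon)$, which I assume (for cleanliness) to be a nonnegative integer, i.e.\ $2/\epsilon$ is a power of two. One application of Definition~\ref{def-dd} covers a ball of radius $R$ by at most $2^\rho$ balls of radius $R/2$; iterating this $t$ times shows that $Ball(c_1,\Delta)$, and therefore $P$, is covered by at most $(2^\rho)^t=2^{\rho t}=(2/\epsilon)^\rho=k$ balls, each of radius $\Delta/2^t=\epsilon\Delta/2$.

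Finally I would combine the two: the $k+1$ points $c_1,\dots,c_{k+1}$ all lie in $P$, hence in the union of the $k$ balls of radius $\epsilon\Delta/2$ from the previous step, so by pigeonhole two of them, say $c_i$ and $c_j$, share a common such ball and thus $\|c_i-c_j\|\le \epsilon\Delta$; together with the invariant of the first step, $r\le\|c_i-c_j\|\le\epsilon\Delta$, which is exactly the asserted bound on the radius of each resulting cluster. The argument is largely routine; the only point needing a little care is the bookkeeping for non-dyadic $\epsilon$ (taking $t=\lceil\log_2(2/\epsilon)\rceil$ instead loses at most an absolute constant factor, so the clean statement $\epsilon\Delta$ holds under the power-of-two convention or after absorbing the loss into $\epsilon$), together with making sure the pigeonhole step genuinely has $k+1$ points to distribute among $k$ balls — which is why one must bring in the hypothetical next center $c_{k+1}$ rather than working with the $k$ produced centers alone. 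I do not expect any genuine obstacle beyond this.
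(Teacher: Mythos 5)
Your proof is correct and follows essentially the same route as the paper: the separation invariant of Gonzalez's centers combined with a doubling-dimension packing bound, except that you prove the packing bound inline (recursive ball covering plus pigeonhole on $k+1$ points) where the paper invokes it as a cited claim bounding $|S|$ via the aspect ratio $\Delta/r$ of the $k$ centers. The only discrepancy is the constant you flag yourself: for non-dyadic $2/\epsilon$ your bookkeeping yields $r\le 2\epsilon\Delta$ rather than $r\le\epsilon\Delta$, whereas the paper absorbs the ceiling via $2^{\rho\lceil\log_2(\Delta/r)\rceil}\le 2^{\rho(1+\log_2(\Delta/r))}$, which is exactly what makes the factor $2$ in $k=(2/\epsilon)^\rho$ cancel and give the clean bound.
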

\begin{proof}
Let $S$ be the set of $k$ points by Gonzalez's algorithm, and the resulting radius be $r$. We also define the aspect ratio of $S$ as the ratio between the maximum and minimum pairwise distances in $S$. Then, it is easy to see that the aspect ratio of $S$ is at most $\Delta/r$. Now, we need the following Claim~\ref{claim-compress} from~\cite{krauthgamer2004navigating,talwar2004bypassing}. Actually, the claim can be obtained by recursively applying the definition of doubling dimension.


\begin{claim}
\label{claim-compress}
Let $(X, d_X)$ be a metric space with the doubling dimension $\rho$, and $Y\subset X$. If the aspect ratio of $Y$ is upper bounded by some positive value $\alpha$, then $|Y|\leq 2^{\rho\lceil\log_2 \alpha\rceil}$.
\end{claim}

Replacing $X$ and $Y$ by $P$ and $S$ respectively in the above claim, we have
\begin{eqnarray}
|S|\leq 2^{\rho\lceil\log_2\Delta/r\rceil}\leq 2^{\rho (1+\log_2\Delta/r)}. \label{for-number1} 
\end{eqnarray}
Since $|S|=(2/\epsilon)^\rho$, (\ref{for-number1}) implies $r\leq \epsilon \Delta$.
\end{proof}

Let $A$ and $B$ be the two given point sets in Definition~\ref{def-align}, and the maximum of their diameters be $\Delta$. We also assume that they both have the doubling dimension at most $\rho$. Our idea for compressing $A$ and $B$ is as follows. As described in Lemma~\ref{lem-number}, we set $k=(\frac{2}{\epsilon})^\rho$ and run Gonzalez's algorithm on $A$ and $B$ respectively. We denote by $S_A=\{c^A_1,\cdots, c^A_k\}$ and $S_B=\{c^B_1,\cdots, c^B_k\}$ the obtained sets of $k$-cluster centers. For each cluster center $c^A_j$ (resp., $c^B_j$), we assign a weight that is equal to the total weights of the points in the corresponding cluster. As a consequence, we obtain a new instance $(S_A, S_B)$ for geometric alignment. It is easy to know that the total weights of $S_A$ (resp., $S_B$) is equal to $W_A$ (resp., $W_B$). The following theorem shows that we can achieve an approximate solution for the instance $(A, B)$ by solving the alignment of $(S_A, S_B)$.


\begin{theorem}
\label{the-quality}
Suppose $\epsilon>0$ is a small parameter in Lemma~\ref{lem-number}. Given any $c\geq 1$, let $\tilde{\mathcal{T}}$ be a rigid transformation yielding $c$-approximation for minimizing $\mathcal{EMD}\big(S_A, \mathcal{T}(S_B)\big)$ in Definition~\ref{def-align}. Then, $\mathcal{EMD}\big(A, \tilde{\mathcal{T}}(B)\big)$
\begin{eqnarray}
&\leq& c(1+2\epsilon)^2\cdot\min_\mathcal{T}\mathcal{EMD}\big(A, \mathcal{T}(B)\big)\nonumber\\
&&+2\epsilon(c+1+2c\epsilon)(1+2\epsilon)\Delta^2\nonumber\\
&=& c\big(1+O(\epsilon)\big)\cdot\min_\mathcal{T}\mathcal{EMD}\big(A, \mathcal{T}(B)\big)\nonumber\\
&&+2\epsilon \big(1+O(\epsilon)\big)(c+1)\Delta^2. \label{for-quality1}
\end{eqnarray}
\end{theorem}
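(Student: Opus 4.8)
The plan is to sandwich $\mathcal{EMD}\big(A,\mathcal{T}(B)\big)$ and $\mathcal{EMD}\big(S_A,\mathcal{T}(S_B)\big)$ for an arbitrary rigid transformation $\mathcal{T}$, and then chain a short sequence of inequalities. Write $\mathcal{T}^*$ for an optimal transformation for the original instance and set $\mathrm{OPT}=\min_\mathcal{T}\mathcal{EMD}\big(A,\mathcal{T}(B)\big)$. By Lemma~\ref{lem-number}, running Gonzalez's algorithm with $k=(2/\epsilon)^\rho$ yields, for each of $A$ and $B$, a clustering of radius at most $\epsilon\Delta$; fix such clusterings and let $\sigma(i)$ (resp.\ $\tau(j)$) denote the index of the center assigned to $a_i$ (resp.\ $b_j$), so that $\|a_i-c^A_{\sigma(i)}\|\le\epsilon\Delta$ and $\|b_j-c^B_{\tau(j)}\|\le\epsilon\Delta$. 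The only analytic fact I will need is the elementary estimate that, since $(x-\Delta)^2\ge 0$, for every $x\ge 0$ one has $(x+2\epsilon\Delta)^2\le(1+2\epsilon)x^2+2\epsilon(1+2\epsilon)\Delta^2$.

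For the \emph{forward} bound, fix any rigid $\mathcal{T}$, let $F=\{f_{ij}\}$ be an optimal flow for $\mathcal{EMD}\big(A,\mathcal{T}(B)\big)$, and define $g_{lm}=\sum_{i:\sigma(i)=l}\sum_{j:\tau(j)=m}f_{ij}$. Because each center's weight equals the total weight of its cluster, $\{g_{lm}\}$ is a feasible flow for $\big(S_A,\mathcal{T}(S_B)\big)$. For any $i$ with $\sigma(i)=l$ and $j$ with $\tau(j)=m$, the triangle inequality together with the distance-preserving property of $\mathcal{T}$ gives $\|c^A_l-\mathcal{T}(c^B_m)\|\le\|a_i-\mathcal{T}(b_j)\|+2\epsilon\Delta$; squaring via the elementary estimate, multiplying by $f_{ij}$, summing over all $i,j$, and dividing by $\min\{W_A,W_B\}$ yields $\mathcal{EMD}\big(S_A,\mathcal{T}(S_B)\big)\le(1+2\epsilon)\mathcal{EMD}\big(A,\mathcal{T}(B)\big)+2\epsilon(1+2\epsilon)\Delta^2$.

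For the \emph{reverse} bound I need that any feasible flow $\{g_{lm}\}$ for $\big(S_A,\mathcal{T}(S_B)\big)$ can be refined to a feasible flow $\{f_{ij}\}$ for $\big(A,\mathcal{T}(B)\big)$ with $\sum_{i:\sigma(i)=l}\sum_{j:\tau(j)=m}f_{ij}=g_{lm}$ for all $l,m$; this is the one genuinely combinatorial point and the main obstacle. I would handle it by a water-filling construction: split the out-flow $G_l:=\sum_m g_{lm}\le W^A_l$ of cluster $l$ among its points as $p_i=\alpha_i G_l/W^A_l$, split the in-flow $G'_m:=\sum_l g_{lm}\le W^B_m$ among cluster $m$'s points as $q_j=\beta_j G'_m/W^B_m$, and set $f_{ij}=p_i q_j g_{lm}/(G_l G'_m)$ for $\sigma(i)=l,\tau(j)=m$ (and $f_{ij}=0$ when $G_l$ or $G'_m$ vanishes). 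A direct check shows $\sum_j f_{ij}=p_i\le\alpha_i$, $\sum_i f_{ij}=q_j\le\beta_j$, $\sum_{i:\sigma(i)=l}\sum_{j:\tau(j)=m}f_{ij}=g_{lm}$, and $\sum_{ij}f_{ij}=\min\{W_A,W_B\}$, so $\{f_{ij}\}$ is feasible. Applying the same triangle-inequality-and-squaring step in the opposite direction (now bounding $\|a_i-\mathcal{T}(b_j)\|$ in terms of $\|c^A_l-\mathcal{T}(c^B_m)\|$) gives $\mathcal{EMD}\big(A,\mathcal{T}(B)\big)\le(1+2\epsilon)\mathcal{EMD}\big(S_A,\mathcal{T}(S_B)\big)+2\epsilon(1+2\epsilon)\Delta^2$.

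Finally I chain three steps: apply the reverse bound at $\mathcal{T}=\tilde{\mathcal{T}}$; use $\mathcal{EMD}\big(S_A,\tilde{\mathcal{T}}(S_B)\big)\le c\cdot\min_\mathcal{T}\mathcal{EMD}\big(S_A,\mathcal{T}(S_B)\big)\le c\,\mathcal{EMD}\big(S_A,\mathcal{T}^*(S_B)\big)$ by the $c$-approximation hypothesis; and apply the forward bound at $\mathcal{T}=\mathcal{T}^*$. This produces $\mathcal{EMD}\big(A,\tilde{\mathcal{T}}(B)\big)\le c(1+2\epsilon)^2\,\mathrm{OPT}+\big(2c\epsilon(1+2\epsilon)^2+2\epsilon(1+2\epsilon)\big)\Delta^2$, and factoring the $\Delta^2$ coefficient as $2\epsilon(1+2\epsilon)(c+1+2c\epsilon)$ gives exactly the bound in~(\ref{for-quality1}); the asymptotic form follows by expanding $(1+2\epsilon)^2=1+O(\epsilon)$. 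Everything outside the flow-refinement lemma is the triangle inequality, rigidity, and the single convexity line $(x+2\epsilon\Delta)^2\le(1+2\epsilon)x^2+2\epsilon(1+2\epsilon)\Delta^2$.
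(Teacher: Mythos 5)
Your proof is correct, and it follows the same overall strategy as the paper: the identical elementary estimate $(x+2\epsilon\Delta)^2\le(1+2\epsilon)x^2+(2\epsilon+4\epsilon^2)\Delta^2$, the same two sandwich inequalities between $\mathcal{EMD}(A,\mathcal{T}(B))$ and $\mathcal{EMD}(S_A,\mathcal{T}(S_B))$, and the same three-step chain (reverse bound at $\tilde{\mathcal{T}}$, $c$-approximation, forward bound at $\mathcal{T}_{opt}$), yielding exactly the stated constants. The one place you diverge is the flow correspondence: the paper sidesteps your refinement lemma entirely by re-representing each center $c^A_l$ as overlapping copies $a'_i$ (one per original point, carrying weight $\alpha_i$), so that the feasible flows for $(A,\mathcal{T}(B))$ and $(S_A,\mathcal{T}(S_B))$ are literally the same indexed objects $\{f_{ij}\}$ and both directions follow from the pointwise inequality with no combinatorial work; your explicit product-form (water-filling) refinement $f_{ij}=p_iq_jg_{lm}/(G_lG'_m)$ is a correct and more self-contained way to establish the same fact, and it makes rigorous the measure-theoretic identification that the paper leaves implicit, at the cost of a page of bookkeeping that the overlapping-points trick renders unnecessary.
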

\begin{proof}
First, we denote by $\mathcal{T}_{opt}$ the optimal rigid transformation achieving $\min_\mathcal{T}\mathcal{EMD}\big(A, \mathcal{T}(B)\big)$. Since $\tilde{\mathcal{T}}$ yields $c$-approximation for minimizing $\mathcal{EMD}\big(S_A, \mathcal{T}(S_B)\big)$, we have $\mathcal{EMD}\big(S_A, \tilde{\mathcal{T}}(S_B)\big)$
\begin{eqnarray}
&\leq& c\cdot \min_\mathcal{T}\mathcal{EMD}\big(S_A, \mathcal{T}(S_B)\big)\nonumber\\
&\leq& c\cdot\mathcal{EMD}\big(S_A, \mathcal{T}_{opt}(S_B)\big).\label{for-quality3}
\end{eqnarray}

Recall that each point $c^A_j$ (resp., $c^B_j$) has the weight equal to the total weights of the points in the corresponding cluster. For instance, if the cluster contains $\{a_{j(1)}, a_{j(2)}, \cdots, a_{j(h)}\}$, the weight of $c^A_j$ should be $\sum^h_{l=1}\alpha_{j(l)}$; actually, we can view $c^A_j$ as $h$ overlapping points $\{a'_{j(1)}, a'_{j(2)}, \cdots, a'_{j(h)}\}$ with each $a'_{j(l)}$ having the weight $\alpha_{j(l)}$. 
Therefore, for the sake of convenience, we use another representation for $S_A$ and $S_B$ in our proof below:
\begin{eqnarray}
S_A=\{a'_1, \cdots, a'_{n_1}\}\text{ and } S_B=\{b'_1, \cdots, b'_{n_2}\}, \label{for-newrep}
\end{eqnarray}
where each $a'_j$ (resp., $b'_j$) has the weight $\alpha_j$ (resp., $\beta_j$). Note that $S_A$ and $S_B$ only have $k$ distinct positions respectively in the space. Moreover, due to Lemma~\ref{lem-number}, we know that $||a'_i-a_i||$, $||b'_j-b_j||\leq \epsilon\Delta$ for any $1\leq i\leq n_1$ and $1\leq j\leq n_2$, and these bounds are invariant under any rigid transformation in the space. Consequently, for any pair $(i, j)$ and rigid transformation $\mathcal{T}$, we have $||a_i-\mathcal{T}(b_j)||^2$
\begin{eqnarray}
&\leq& \big(||a_i-a'_i||+||a'_i-\mathcal{T}(b'_j)||\nonumber\\
&&+||\mathcal{T}(b'_j)-\mathcal{T}(b_j)||\big)^2\nonumber\\
&\leq&\big(||a'_i-\mathcal{T}(b'_j)||+2\epsilon\Delta\big)^2\nonumber\\
&=&||a'_i-\mathcal{T}(b'_j)||^2+4\epsilon\Delta||a'_i-\mathcal{T}(b'_j)||+4\epsilon^2\Delta^2\nonumber\\
&\leq&||a'_i-\mathcal{T}(b'_j)||^2+2\epsilon\big(\Delta^2+||a'_i-\mathcal{T}(b'_j)||^2\big)\nonumber\\
&&+4\epsilon^2\Delta^2 \nonumber\\
&=&(1+2\epsilon)||a'_i-\mathcal{T}(b'_j)||^2+(2\epsilon+4\epsilon^2)\Delta^2\label{for-quality4}
\end{eqnarray}
by applying triangle inequality. Using exactly the same idea, we have $||a'_i-\mathcal{T}(b'_j)||^2$
\begin{eqnarray}
\leq (1+2\epsilon)||a_i-\mathcal{T}(b_j)||^2+(2\epsilon+4\epsilon^2)\Delta^2.\label{for-quality5}
\end{eqnarray}
Based on Definition~\ref{def-emd}, we denote by $\tilde{F}=\{\tilde{f}_{ij}\}$ the induced flow of $\mathcal{EMD}\big(S_A, \tilde{\mathcal{T}}(S_B)\big)$ (using the representations (\ref{for-newrep}) for $S_A$ and $S_B$). Then (\ref{for-quality4}) directly implies that $\mathcal{EMD}\big(A, \tilde{\mathcal{T}}(B)\big)$
\begin{eqnarray}
&\leq&\frac{1}{\min\{W_A, W_B\}}\sum^{n_1}_{i=1}\sum^{n_2}_{j=1}\tilde{f}_{ij}||a_{i}-\tilde{\mathcal{T}}(b_j)||^2\nonumber\\
&\leq&\frac{1+2\epsilon}{\min\{W_A, W_B\}}\sum^{n_1}_{i=1}\sum^{n_2}_{j=1}\tilde{f}_{ij}||a'_{i}-\tilde{\mathcal{T}}(b'_j)||^2\nonumber\\
&&+(2\epsilon+4\epsilon^2)\Delta^2\nonumber\\
&=&(1+2\epsilon)\mathcal{EMD}(S_A, \tilde{\mathcal{T}}(S_B))\nonumber\\
&&+(2\epsilon+4\epsilon^2)\Delta^2.\label{for-quality6}
\end{eqnarray}
By the similar idea (replacing $\tilde{\mathcal{T}}$ by $\mathcal{T}_{opt}$, and exchanging the roles of $(A, B)$ and $(S_A, S_B)$),  (\ref{for-quality5}) directly implies that $\mathcal{EMD}\big(S_A, \mathcal{T}_{opt}(S_B)\big)$
\begin{eqnarray}
&\leq&(1+2\epsilon)\mathcal{EMD}\big(A, \mathcal{T}_{opt}(B)\big)\nonumber\\
&&+(2\epsilon+4\epsilon^2)\Delta^2.\label{for-quality7}
\end{eqnarray}
Combining (\ref{for-quality3}), (\ref{for-quality6}), and (\ref{for-quality7}), we have $\mathcal{EMD}\big(A, \tilde{\mathcal{T}}(B)\big)$
\begin{eqnarray}
&\leq&(1+2\epsilon)\mathcal{EMD}\big(S_A, \tilde{\mathcal{T}}(S_B)\big)\nonumber\\
&&+(2\epsilon+4\epsilon^2)\Delta^2\nonumber\\
&\leq&(1+2\epsilon)\cdot c\cdot\mathcal{EMD}\big(S_A, \mathcal{T}_{opt}(S_B)\big)\nonumber\\
&&+(2\epsilon+4\epsilon^2)\Delta^2 \nonumber\\
&\leq& c(1+2\epsilon)^2\cdot\mathcal{EMD}\big(A, \mathcal{T}_{opt}(B)\big)\nonumber\\
&&+2\epsilon(c+1+2c\epsilon)(1+2\epsilon)\Delta^2,
\end{eqnarray}
and the proof is completed.
\end{proof}
 
When $\epsilon$ is small enough, Theorem~\ref{the-quality} shows that $\mathcal{EMD}\big(A, \tilde{\mathcal{T}}(B)\big)\approx c\cdot\mathcal{EMD}\big(A, \mathcal{T}_{opt}(B)\big)$. That is, $\tilde{\mathcal{T}}$, the solution of $(S_A, S_B)$, achieves roughly the same performance on $(A, B)$. 
Consequently, we propose the approximation algorithm for geometric alignment (see Algorithm~\ref{alg-1}). We would like to emphasize that though we use the algorithm from~\cite{cohen1999earth} in Step 3, Theorem~\ref{the-quality} is an independent result; that is, any alignment method with the same objective function in Definition~\ref{def-align} can benefit from Theorem~\ref{the-quality}.

\renewcommand{\algorithmicrequire}{\textbf{Input:}}
\renewcommand{\algorithmicensure}{\textbf{Output:}}
\begin{algorithm}
   \caption{Geometric Alignment}
   \label{alg-1}
\begin{algorithmic}[1]
\REQUIRE An instance $(A, B)$ of the geometric alignment problem in Definition~\ref{def-align} with bounded doubling dimension $\rho$ in $\mathbb{R}^d$; $\epsilon\in(0,1)$.
\ENSURE A rigid transformation $\mathcal{T}$ of $B$ and the EMD flow between $A$ and $\mathcal{T}(B)$.
\STATE Set $k=(2/\epsilon)^\rho$.
\STATE Run Gonzalez's $k$-center clustering algorithm on $A$ and $B$, and obtain the sets of cluster centers $S_A$ and $S_B$ respectively.
\STATE Apply the existing alignment algorithm, e.g., ~\cite{cohen1999earth}, on $(S_A, S_B)$.
\STATE Obtain the rigid transformation $\mathcal{T}$ from Step 3, and compute the corresponding EMD flow between $A$ and $\mathcal{T}(B)$.
\end{algorithmic}
\end{algorithm}

\section{The Time Complexity}
\label{sec-time}
We analyze the time complexity of Algorithm~\ref{alg-1} and consider step 2-4 separately. 
To simplify our description, we use $n$ to denote $\max\{n_1, n_2\}$. In step 3, we suppose that the iterative approach~\cite{cohen1999earth} takes $\lambda\geq1$ rounds.

\textbf{Step 2.} A straightforward implementation of Gonzalez's  algorithm is selecting the $k$ cluster centers iteratively where the resulting running time is $O(knd)$. Several faster implementations for the high dimensional case with low doubling dimension have been studied before; their idea is to maintain some data structures to reduce the amortized complexity of each iteration. We refer the reader to~\cite{har2006fast} for more details.


\textbf{Step 3.} Since we run the algorithm~\cite{cohen1999earth} on the smaller instance $(S_A, S_B)$ instead of $(A, B)$, we know that the complexity of Step 3 is $O\Big(\lambda\big(\Gamma(k,d)+k^2 d+kd^2+d^3\big)\Big)$ by Proposition~\ref{pro-cg}.

\textbf{Step 4.} We need to compute the transformed $\mathcal{T}(B)$ first and then $\mathcal{EMD}(A, \mathcal{T}(B))$. Note that the transformation $\mathcal{T}$ is not off-the-shelf, because it is the combination of a sequence of rigid transformations from the iterative approach~\cite{cohen1999earth} in Step 3. Since it takes $\lambda$ rounds, $\mathcal{T}$ should be the multiplication of $\lambda$ rigid transformations. We use $(\mathcal{R}_l, \overrightarrow{v}_l)$ to denote the orthogonal matrix and translation vector obtained in the $l$-th round for $1\leq l\leq \lambda$. We can update $B$ round by round: starting from $l=1$, update $B$ to be $\mathcal{R}_l B+\overrightarrow{v}_l$ in each round; the whole time complexity will be $O(\lambda n d^2)$. In fact, we have a more efficient way by computing $\mathcal{T}$ first before transforming $B$.

\begin{lemma}
\label{lem-tran}
Let $(\mathcal{R}, \overrightarrow{v})$ be the orthogonal matrix and translation vector of $\mathcal{T}$. Then
\begin{eqnarray}
\mathcal{R}&=&\Pi^\lambda_{l=1}\mathcal{R}_l, \nonumber\\
\overrightarrow{v}&=&(\Pi^\lambda_{l=2}\mathcal{R}_l)\overrightarrow{v}_1+(\Pi^\lambda_{l=3}\mathcal{R}_l)\overrightarrow{v}_2+\nonumber\\
&&\cdots+\mathcal{R}_\lambda\overrightarrow{v}_{\lambda-1}+\overrightarrow{v}_\lambda,\label{for-tran}
\end{eqnarray}
and $\mathcal{T}(B)$ can be obtained in $O(\lambda d^3+n d^2)$ time. 
\end{lemma}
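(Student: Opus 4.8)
The plan is to first verify the closed-form expressions in (\ref{for-tran}) by unrolling the composition of the $\lambda$ affine maps, and then to exhibit an evaluation order that realizes the stated running time.

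\textbf{Deriving the formulas.} Write $\mathcal{T}_l(x)=\mathcal{R}_l x+\overrightarrow{v}_l$ for the rigid transformation applied in the $l$-th round, so that the transformation returned by the iterative procedure is $\mathcal{T}=\mathcal{T}_\lambda\circ\mathcal{T}_{\lambda-1}\circ\cdots\circ\mathcal{T}_1$. I would prove by induction on $l$ that $\mathcal{T}_l\circ\cdots\circ\mathcal{T}_1$ is the affine map $x\mapsto\big(\Pi^l_{m=1}\mathcal{R}_m\big)x+\overrightarrow{u}_l$, where $\overrightarrow{u}_1=\overrightarrow{v}_1$ and $\overrightarrow{u}_l=\mathcal{R}_l\overrightarrow{u}_{l-1}+\overrightarrow{v}_l$ for $l\geq 2$, and where $\Pi^l_{m=1}\mathcal{R}_m$ denotes the ordered (non-commutative) product $\mathcal{R}_l\mathcal{R}_{l-1}\cdots\mathcal{R}_1$ with the empty product set to the identity. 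The base case $l=1$ is immediate, and the inductive step is a single substitution $\mathcal{T}_l\big(\mathcal{T}_{l-1}\circ\cdots\circ\mathcal{T}_1(x)\big)=\mathcal{R}_l\big(\Pi^{l-1}_{m=1}\mathcal{R}_m\big)x+\mathcal{R}_l\overrightarrow{u}_{l-1}+\overrightarrow{v}_l$. Taking $l=\lambda$ gives $\mathcal{R}=\Pi^\lambda_{l=1}\mathcal{R}_l$, and expanding the recursion $\overrightarrow{u}_\lambda=\mathcal{R}_\lambda\overrightarrow{u}_{\lambda-1}+\overrightarrow{v}_\lambda$ all the way down telescopes into exactly the sum $(\Pi^\lambda_{l=2}\mathcal{R}_l)\overrightarrow{v}_1+(\Pi^\lambda_{l=3}\mathcal{R}_l)\overrightarrow{v}_2+\cdots+\mathcal{R}_\lambda\overrightarrow{v}_{\lambda-1}+\overrightarrow{v}_\lambda$ appearing in (\ref{for-tran}).

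\textbf{Bounding the running time.} I would compute $\mathcal{R}$ incrementally: set $\mathcal{R}^{(1)}=\mathcal{R}_1$ and $\mathcal{R}^{(l)}=\mathcal{R}_l\mathcal{R}^{(l-1)}$ for $l=2,\dots,\lambda$, so that $\mathcal{R}=\mathcal{R}^{(\lambda)}$; this is $\lambda-1$ matrix-matrix multiplications of $d\times d$ matrices, costing $O(\lambda d^3)$. For $\overrightarrow{v}$, I would run the recursion $\overrightarrow{u}_1=\overrightarrow{v}_1$, $\overrightarrow{u}_l=\mathcal{R}_l\overrightarrow{u}_{l-1}+\overrightarrow{v}_l$ directly, which uses $\lambda-1$ matrix-vector products plus vector additions and costs only $O(\lambda d^2)$ — crucially, the partial products $\Pi^\lambda_{l=m}\mathcal{R}_l$ in (\ref{for-tran}) are never materialized, which is what prevents an $O(\lambda^2 d^3)$ blowup. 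Finally $\mathcal{T}(B)=\{\mathcal{R}b+\overrightarrow{v}\mid b\in B\}$ requires one matrix-vector product per point of $B$, i.e.\ $O(nd^2)$. Summing, the total is $O(\lambda d^3+\lambda d^2+nd^2)=O(\lambda d^3+nd^2)$ since $\lambda d^2\leq\lambda d^3$, which beats the naive $O(\lambda nd^2)$ round-by-round update whenever $\lambda d^3=O(\lambda nd^2)$, i.e.\ $d\leq n$ (our standing assumption).

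\textbf{Main obstacle.} There is no deep difficulty; the only points requiring care are (i) pinning down the order of the non-commutative matrix products so that (\ref{for-tran}) is literally correct, i.e.\ matching the left-to-right composition order of the rounds (later rounds multiply on the left), and (ii) observing that evaluating $\overrightarrow{v}$ via the recursion rather than term by term, and computing $\mathcal{T}$ before touching $B$, keeps the $n$-dependent work to a single pass over $B$.
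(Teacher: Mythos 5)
Your proposal is correct and follows essentially the same route as the paper: verify the closed form by unrolling the composition of the $\lambda$ affine maps, compute $\mathcal{R}$ with $\lambda-1$ successive $d\times d$ products in $O(\lambda d^3)$ time, obtain $\overrightarrow{v}$ with $O(\lambda)$ matrix--vector operations, and then apply $(\mathcal{R},\overrightarrow{v})$ to $B$ in one pass costing $O(nd^2)$. The only cosmetic difference is that the paper assembles $\overrightarrow{v}$ from the explicitly computed suffix products $\Pi^\lambda_{l=i}\mathcal{R}_l$ while you use the forward recursion $\overrightarrow{u}_l=\mathcal{R}_l\overrightarrow{u}_{l-1}+\overrightarrow{v}_l$; both fit within the stated bound.
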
 
\begin{proof}
The equations (\ref{for-tran}) can be easily verified by simple calculations. In addition, we can recursively compute the multiplications $\Pi^\lambda_{l=i}\mathcal{R}_l$ for $i=\lambda, \lambda-1, \cdots, 1$. Consequently, the orthogonal matrix $\mathcal{R}$ and translation vector $\overrightarrow{v}$ can be obtained in $O(\lambda d^3)$ time. In addition, the complexity for computing $\mathcal{T}(B)=\mathcal{R}B+\overrightarrow{v}$ is $O(n d^2)$.
\end{proof}

\begin{table*}[htbp]
\centering
\caption{Random dataset: EMDs and running times of different compression levels.}
\begin{tabular}{|l|c|c|c|c|c|c|c|}
\hline
$\gamma$  &$\textbf{1/50}$&$1/40$ &$1/30$&$1/20$ &$1/10$&$\textbf{1}$ \\ \hline
EMD&$0.948$&$0.946$&$0.945$&$0.943$&$0.941$&$0.933$  \\ \hline
Time (s)&$48.7$&$54.2$&$61.0$&$80.6$ &$144.6$ &$1418.2$ \\ \hline
\end{tabular}
\label{tab-random}
\end{table*}

\begin{table*}[htbp]
\centering
\caption{Linguistic datasets: EMDs and running times of different compression levels.
}
\begin{tabular}{|c|c|c|c|c|c|c|c|c|}
\hline
\diagbox[height=0.5cm,width=3.5cm]{Datasets}{$\gamma$ } &$\textbf{1/50}$ &$1/40$ &$1/30$ &$1/20$&$1/10$ &$\textbf{1}$\\
\hline
\textit{sp-en} EMD &$0.989$ &$0.969$ &$0.955$ &$0.931$ &$0.892$ &$0.820$\\
Time (s)&$3.4$ &$3.6$ &$3.7$ &$3.9$ &$5.3$ &$100.5$\\
\hline
\textit{it-en} EMD &$0.983$ &$0.966$ &$0.951$ &$0.935$ &$0.899$ &$0.847$ \\
                    Time (s)&$5.4$ &$5.9$ &$6.1$ &$6.5$ &$8.8$ &$162.6$ \\
\hline
\textit{ja-ch} EMD &$0.991$ &$0.975$ &$0.962$ &$0.941$ &$0.910$ &$0.836$ \\
          Time (s)&$1.6$ &$2.1$ &$2.2$ &$2.0$ &$3.0$ &$67.0$ \\
\hline
\textit{tu-en} EMD &$0.982$ &$0.960$ &$0.946$ &$0.922$ &$0.889$ &$0.839$ \\
                 Time (s)&$10.1$ &$10.4$ &$11.2$ &$11.9$ &$15.9$ &$287.0$ \\
\hline
\textit{ch-en} EMD &$1.014$ &$1.012$ &$0.990$ &$0.962$ &$0.923$ &$0.842$ \\
                    Time (s)&$1.9$ &$1.7$ &$2.2$ &$2.4$ &$2.7$ &$51.6$ \\
\hline
\end{tabular}
\label{tab-lin}
\end{table*}


Lemma~\ref{lem-tran} provides a complexity significantly lower than the previous $O(\lambda n d^2)$ (usually $n$ is much larger than $d$ in practice). 
After obtaining $\mathcal{T}(B)$, we can compute $\mathcal{EMD}(A, \mathcal{T}(B))$ in $\Gamma(n,d)$ time.
\textbf{Note} that the complexity $\Gamma(n,d)$ usually is $\Omega(n^2 d)$, which dominates the complexity of Step 2 and the second term $nd^2$ in the complexity by Lemma~\ref{lem-tran}. As a consequence, we have the following theorem for the running time. 

\begin{theorem}
\label{the-time}
Suppose $n=\max\{n_1, n_2\}\geq d$ and the algorithm of ~\cite{cohen1999earth} takes $\lambda\geq1$ rounds. The running time of Algorithm~\ref{alg-1} is 
$O\Big(\lambda\big(\Gamma(k, d)+k^2d+kd^2+d^3\big)\Big)+\Gamma(n, d)$.
\end{theorem}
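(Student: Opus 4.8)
The plan is to verify Theorem~\ref{the-time} by simply collecting the costs of Steps 2, 3, 4 that were already analyzed individually in Section~\ref{sec-time}, and then observing that most terms are absorbed.

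First I would recall the per-step bounds established above. Step 2 (Gonzalez's clustering on $A$ and $B$) costs $O(knd)$ in the straightforward implementation. Step 3 runs $\lambda$ rounds of the approach of~\cite{cohen1999earth} on the compressed instance $(S_A, S_B)$, which has at most $k$ distinct points on each side, so by Proposition~\ref{pro-cg} it costs $O\big(\lambda(\Gamma(k,d)+k^2d+kd^2+d^3)\big)$. Step 4 splits into (i) assembling $\mathcal{T}$ from the $\lambda$ round-transformations and applying it to $B$, which by Lemma~\ref{lem-tran} costs $O(\lambda d^3 + nd^2)$, and (ii) computing $\mathcal{EMD}(A,\mathcal{T}(B))$, which costs $\Gamma(n,d)$.

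The next step is the absorption argument. Summing gives $O(knd) + O\big(\lambda(\Gamma(k,d)+k^2d+kd^2+d^3)\big) + O(\lambda d^3 + nd^2) + \Gamma(n,d)$. Using $\lambda\ge 1$, the $\lambda d^3$ term from Step 4 is dominated by the $\lambda d^3$ already present in the Step 3 bound. Using the stated assumption $\Gamma(n,d)=\Omega(n^2 d)$ together with $n\ge d$, the term $nd^2 \le n^2 d = O(\Gamma(n,d))$ is absorbed into the final $\Gamma(n,d)$; likewise $knd$ is absorbed since $k\le n$ in any nontrivial regime (indeed $knd\le n^2d=O(\Gamma(n,d))$, or one notes $k\le n$ because compressing to more than $n$ centers is pointless), so the Step 2 cost disappears as well. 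What remains is exactly $O\big(\lambda(\Gamma(k,d)+k^2d+kd^2+d^3)\big)+\Gamma(n,d)$, which is the claimed bound.

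The only mildly delicate point — and the place I would be most careful — is justifying the absorptions, since they rely on the side conditions: $n\ge d$ (stated in the theorem), $\Gamma(n,d)=\Omega(n^2d)$ (flagged in the \textbf{Note} preceding the theorem as the typical case), and implicitly $k\le n$ (otherwise replacing the point set by $S_A, S_B$ would not be a compression at all). I would state these explicitly so the reader sees that under the working assumptions of the paper the simplified expression is legitimate. There is no real obstacle here; the ``proof'' is essentially bookkeeping plus one monotonicity observation, so the argument is short.
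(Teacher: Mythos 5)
Your proposal is correct and follows essentially the same route as the paper: the paper also just sums the per-step costs (Step 2: $O(knd)$; Step 3: $O\big(\lambda(\Gamma(k,d)+k^2d+kd^2+d^3)\big)$ via Proposition~\ref{pro-cg}; Step 4: $O(\lambda d^3+nd^2)+\Gamma(n,d)$ via Lemma~\ref{lem-tran}) and then absorbs the $knd$ and $nd^2$ terms into $\Gamma(n,d)=\Omega(n^2d)$ using $n\geq d$, exactly as you do. Your explicit flagging of the side conditions ($k\leq n$, $\Gamma(n,d)=\Omega(n^2d)$) matches the paper's ``Note'' preceding the theorem.
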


If we run the same number of rounds on the original instance $(A, B)$ by the approach~\cite{cohen1999earth}, the total running time will be $O\Big(\lambda\big(\Gamma(n, d)+n^2d\big)\Big)$ by Proposition~\ref{pro-cg}. When $k\ll n$, Algorithm~\ref{alg-1} achieves a significant reduction on the running time.

\section{Experiments}
\label{sec-exp}

We implement our proposed algorithm and test the performance on both random and real datasets. All of the experimental results are obtained on a Windows workstation with 2.4GHz Intel Xeon CPU and 32GB DDR4 Memory. For each dataset, we run $20$ trials and report the average results. 
We set the iterative approach~\cite{cohen1999earth} to terminate when the change of the objective value is less than $10^{-3}$.

To construct a random dataset, we randomly generate two manifolds in $\mathbb{R}^{500}$, which are represented by the polynomial functions with low dimension ($\leq 50$); in the manifolds, we take two sets of randomly sampled points having the sizes of $n_1=2\times 10^4$ and $n_2=3\times 10^4$, respectively; also, for each sampled point, we randomly assign a positive weight; finally, we obtain two weighted point sets as an instance of geometric alignment. 

For real datasets, we consider the two applications mentioned in Section~\ref{sec-intro}, unsupervised bilingual lexicon induction and PPI network alignment. For the first application, we have 5 pairs of languages: {\em Chinese-English, Spanish-English, Italian-English, Japanese-Chinese, and Turkish-English}. Given by~\cite{DBLP:conf/emnlp/ZhangLLS17}, each language has a vocabulary list containing $3000$ to $13000$ words; we also follow their preprocessing idea to represent all the words by vectors in $\mathbb{R}^{50}$ through the embedding technique~\cite{mikolov2013exploiting}. Actually, each vocabulary list is represented by a distribution in the space where each vector has the weight equal to the corresponding frequency in the language. For the second application, we use the popular benchmark dataset NAPAbench~\cite{NAPAbench} of PPI networks. It contains $3$ pairs of PPI networks, where each network is a graph of $3000$-$10000$ nodes. As the step of  preprocessing, we apply the recent {\em node2vec} technique~\cite{grover2016node2vec} to represent each network by a group of vectors in $\mathbb{R}^{100}$; following~\cite{DBLP:conf/aaai/LiuDC017}, we assign a unit weight to each vector. 

\textbf{Results.} For each instance, we try different compression levels. According to Algorithm~\ref{alg-1}, we compress the size of each point set to be $k$. We set $k=\gamma\times \max\{n_1, n_2\}$ where $\gamma\in\{1/50, 1/40, 1/30, 1/20, 1/10, 1\}$; in particular, $\gamma=1$ indicates that we directly run the algorithm~\cite{cohen1999earth} without compression.
The purpose of our proposed approach is to design a compression method, such that the resulting qualities on the original and compressed datasets are close (as stated in Section~\ref{sec-prior} and the paragraph after the proof of Theorem~\ref{the-quality}). So in the experiment, we focus on the comparison with the results on the original data (i.e., the results of $\gamma=1$).

The results on random dataset are shown in Table~\ref{tab-random}. The obtained EMDs by compression are only slightly higher than the ones of $\gamma=1$, while the advantage of the compression on running time is significant. For example, the running time of $\gamma=1/50$ is less than $5\%$ of the one of $\gamma=1$. We obtain the similar performances on the real datasets. The results on Linguistic   are shown in Table~\ref{tab-lin} (due to the space limit, we put the results on PPI network dataset in the full version of our paper); the EMD for each compression level is always at most $1.2$ times the baseline with $\gamma=1$, but the corresponding running time is dramatically reduced. 

To further show the robustness of our method, we particularly add Gaussian noise to the random dataset and study the change of the objective value by varying the noise level. We set the standard variance of the Gaussian noise to be $\eta\times \Delta$, where $\Delta$ is the maximum diameter of the point sets and $\eta$ is from $0.5\times 10^{-2}$ to $2.5\times 10^{-2}$. Figure.~\ref{fig-robust} shows that the obtained EMD over baseline remains very stable (slightly higher than $1$) of each noise level $\eta$. 

\begin{figure}
\center
\includegraphics[height=1in]{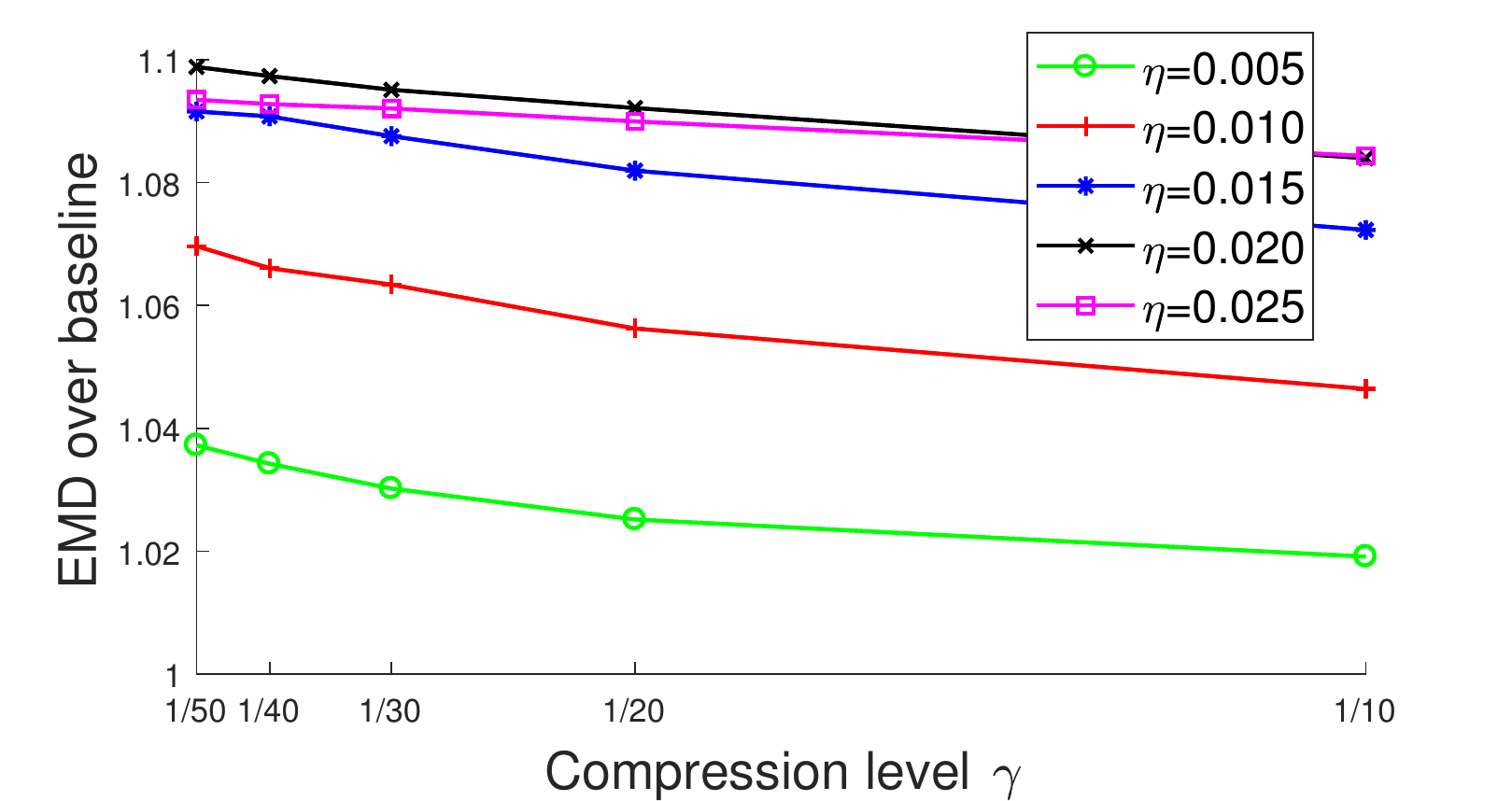}
    \caption{The EMDs over baseline for different noise levels.}
    \label{fig-robust}  
\end{figure}

\section{Conclusion}

In this paper, we propose a novel framework for compressing point sets in high dimension, so as to approximately preserve the quality of alignment. This work is motivated by several emerging applications in the fields of machine learning, bioinformatics, and wireless network. Our method utilizes the property of low doubling dimension, and yields a significant speedup on alignment. In the experiments on random and real datasets, we show that the proposed compression approach can efficiently reduce the running time to a great extent. 

\section{ Acknowledgments}
The research of this work was supported in part by NSF through grant CCF-1656905 and a start-up fund from Michigan State University. The authors also want to thank the anonymous reviewers for their helpful comments and suggestions for improving the paper.
\section{Appendix}
\begin{lemma}
\label{lem-app}
The multiplication $M_{A}\times M_{B}^{T}$ can be computed in $O(n_{1}n_{2}d+\min\{n_{1},n_{2}\}\cdot d^{2})$ time.
\end{lemma}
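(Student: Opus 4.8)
The plan is to avoid ever materializing the $d\times(n_1n_2)$ matrices $M_A$ and $M_B$ and instead recognize $M_A M_B^T$ as a product of three small matrices, then parenthesize it wisely. First I would check the bookkeeping forced by the orderings in (\ref{for-cg1}) and (\ref{for-cg2}): in both $\hat A$ and $\hat B$ the outer block index is $i\in\{1,\dots,n_1\}$ and the inner index is $j\in\{1,\dots,n_2\}$, so the columns of $M_A$ and $M_B$ are indexed by the same pairs $(i,j)$, with the $(i,j)$-th column of $M_A$ equal to $\sqrt{f_{ij}}\,a_i$ and the $(i,j)$-th column of $M_B$ equal to $\sqrt{f_{ij}}\,(b_j+\overrightarrow{v})$. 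Hence
\[
M_A M_B^T=\sum_{i=1}^{n_1}\sum_{j=1}^{n_2}\big(\sqrt{f_{ij}}\,a_i\big)\big(\sqrt{f_{ij}}\,(b_j+\overrightarrow{v})\big)^T=\sum_{i=1}^{n_1}\sum_{j=1}^{n_2}f_{ij}\,a_i\,(b_j+\overrightarrow{v})^T .
\]
Equivalently, letting $\tilde A$ be the $d\times n_1$ matrix whose $i$-th column is $a_i$, $\tilde B$ the $d\times n_2$ matrix whose $j$-th column is $b_j+\overrightarrow{v}$, and $F=(f_{ij})$ the $n_1\times n_2$ flow matrix, we have $M_A M_B^T=\tilde A\,F\,\tilde B^T$; all three factors are small and are already available (or computable in $O(nd)$ time).

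Then I would simply pick the association order for this triple product that minimizes cost. If $n_1\le n_2$, compute $F\tilde B^T$ first — an $(n_1\times n_2)\!\cdot\!(n_2\times d)$ product costing $O(n_1 n_2 d)$ and yielding an $n_1\times d$ matrix — and then left-multiply by $\tilde A$, an $(d\times n_1)\!\cdot\!(n_1\times d)$ product costing $O(n_1 d^2)$, for a total of $O(n_1 n_2 d + n_1 d^2)$. If instead $n_2<n_1$, compute $\tilde A F$ first at cost $O(n_1 n_2 d)$ and then right-multiply by $\tilde B^T$ at cost $O(n_2 d^2)$, for a total of $O(n_1 n_2 d + n_2 d^2)$. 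In either case the running time is $O(n_1 n_2 d + \min\{n_1,n_2\}\cdot d^2)$, as claimed.

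There is essentially no hard step; the only thing requiring care is the first paragraph, namely verifying that the two block orderings in (\ref{for-cg1}) and (\ref{for-cg2}) genuinely pair up column-for-column by $(i,j)$ so that the scalar factors combine as $\sqrt{f_{ij}}\cdot\sqrt{f_{ij}}=f_{ij}$, and observing that $\tilde A$ and $\tilde B$ need not carry the multiplicities present in $\hat A$ and $\hat B$. Once the factorization $M_A M_B^T=\tilde A\,F\,\tilde B^T$ is established, the time bound follows immediately from choosing the cheaper of the two parenthesizations.
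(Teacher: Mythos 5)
Your proposal is correct and follows essentially the same route as the paper's proof: both recognize $M_AM_B^T$ as the triple product $AFB^T$ (you write $\tilde A F\tilde B^T$ to account for the translation $\overrightarrow{v}$) and obtain the bound by choosing the cheaper parenthesization. Your write-up is in fact slightly more explicit than the paper's, which leaves the final cost analysis of $AFB^T$ as "easy to know."
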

\begin{proof}
With a slight abuse of notations, we also use $F$ to denote the $n_1\times n_2$ matrix of the EMD flow where each entry is $f_{ij}$; also, $F_{i,:}$ represents the $i$-th row of the matrix $F$. Given a vector $t$, we use $\sqrt{t}$ to denote the new vector with each entry being the square root of the corresponding one in $t$. Also, we use $diag(t)$ to denote the diagonal matrix where the $i$-th diagonal entry is the $i$-th entry of $t$. Following the constructions of $M_A$ and $M_B$, we have 
\begin{eqnarray}
M_{A}&=&[\sqrt{f_{11}}a_{1}^{1},\cdots,\sqrt{f_{1n_{2}}}a_{1}^{n_{2}};\nonumber\\
&&\cdots\cdots\cdots\nonumber\\
&&\sqrt{f_{n_{1}1}}a_{n_{1}}^{1},\cdots,\sqrt{f_{n_{1}n_{2}}}a_{n_{1}}^{n_{2}}]\nonumber\\
&=&[a_{1}\sqrt{F_{1,:}},a_{2}\sqrt{F_{2,:}},\cdots,a_{n_{1}}\sqrt{F_{n_{1},:}}];\nonumber\\
M_{B}&=&[\sqrt{f_{11}}b_{1}^{1},\cdots,\sqrt{f_{1n_{2}}}b_{n_{2}}^{1};\nonumber\\
&&\cdots\cdots\cdots\nonumber\\
&&\sqrt{f_{n_{1}1}}b_{1}^{n_{1}},\cdots,\sqrt{f_{n_{1}n_{2}}}b_{n_{2}}^{n_{1}}]\nonumber\\
&=&B[\textnormal{diag}(\sqrt{F_{1,:}}),\textnormal{diag}(\sqrt{F_{2,:}}),\nonumber\\
&&\cdots,\textnormal{diag}(\sqrt{F_{n_{1},:}})],\nonumber
\end{eqnarray}
by some simple calculation. 
Then,
\begin{align*}
M_{A}\times M_{B}^{T}&=\sum_{i=1}^{n_{1}}\big(a_{i}\sqrt{F_{i,:}}\big)\times\big(\textnormal{diag}(\sqrt{F_{i,:}})B^{T}\big)\\
&=\sum_{i=1}^{n_{1}}a_{i}F_{i,:}B^{T}=AFB^{T}.
\end{align*}
It is easy to know that computing $AFB^{T}$ takes $O(n_{1}n_{2}d+\min\{n_{1},n_{2}\}\cdot d^{2})$ time.
\end{proof}

\bibliographystyle{aaai}
\bibliography{nips_2017}

\end{document}